\documentclass[twoside]{article}

\usepackage{aistats2019}
% If your paper is accepted, change the options for the package
% aistats2019 as follows:
%
%\usepackage[accepted]{aistats2019}
%
% This option will print headings for the title of your paper and
% headings for the authors names, plus a copyright note at the end of
% the first column of the first page.

% If you set papersize explicitly, activate the following three lines:
%\special{papersize = 8.5in, 11in}
%\setlength{\pdfpageheight}{11in}
%\setlength{\pdfpagewidth}{8.5in}

% If you use natbib package, activate the following three lines:
\usepackage[round]{natbib}
\usepackage{math} 						% common math macros

\usepackage{tikz}
\usetikzlibrary{arrows,calc,shapes,intersections}

% standard matplotlib color palette
\definecolor{C0}{HTML}{1f77b4}
\definecolor{C1}{HTML}{ff7f0e}
\definecolor{C2}{HTML}{2ca02c}
\definecolor{C3}{HTML}{d62728}
\definecolor{C4}{HTML}{9467bd}
\definecolor{C5}{HTML}{8c564b}
\definecolor{C6}{HTML}{e377c2}
\definecolor{C7}{HTML}{7f7f7f}
\definecolor{C8}{HTML}{bcbd22}
\definecolor{C9}{HTML}{17becf}

% If you use BibTeX in apalike style, activate the following line:
% \bibliographystyle{apalike}

%% CUSTOM COMMANDS
\usepackage{math}
\usepackage{enumitem}
\usepackage{xspace}
\usepackage{tabularx}
\usepackage{booktabs} % To thicken table lines

\usepackage{mathtools}

\newcommand{\MATLAB}{\textsc{Matlab}\xspace}

\newcommand{\bx}{\mathbf{x}}
\newcommand{\bu}{\mathbf{u}}
\newcommand{\bp}{\mathbf{p}}
\newcommand{\bw}{\mathbf{w}}
\newcommand{\bz}{\mathbf{z}}

\begin{document}

% If your paper is accepted and the title of your paper is very long,
% the style will print as headings an error message. Use the following
% command to supply a shorter title of your paper so that it can be
% used as headings.
%
%\runningtitle{I use this title instead because the last one was very long}

% If your paper is accepted and the number of authors is large, the
% style will print as headings an error message. Use the following
% command to supply a shorter version of the authors names so that
% they can be used as headings (for example, use only the surnames)
%
%\runningauthor{Surname 1, Surname 2, Surname 3, ...., Surname n}

\twocolumn[

\aistatstitle{An Optimal Control Approach to Sequential Machine Teaching}
\aistatsauthor{ Laurent Lessard \And Xuezhou Zhang \And  Xiaojin Zhu }
\aistatsaddress{ University of Wisconsin--Madison \And  University of Wisconsin--Madison \And University of Wisconsin--Madison } ]

\begin{abstract}

Given a sequential learning algorithm and a target model, sequential machine teaching aims to find the shortest training sequence to drive the learning algorithm to the target model. We present the first principled way to find such shortest training sequences. Our key insight is to formulate sequential machine teaching as a time-optimal control problem. This allows us to solve sequential teaching by leveraging key theoretical and computational tools developed over the past 60 years in the optimal control community. Specifically, we study the Pontryagin Maximum Principle, which yields a necessary condition for optimality of a training sequence. We present analytic, structural, and numerical implications of this approach on a case study with a least-squares loss function and gradient descent learner. We compute optimal training sequences for this problem, and although the sequences seem circuitous, we find that they can vastly outperform the best available heuristics for generating training sequences.
\end{abstract}

\section{INTRODUCTION}

Machine teaching studies optimal control on machine learners~\citep{Zhu2018Overview,Zhu2015Machine}.
In controls language the plant is the learner, the state is the model estimate, and the input is the (not necessarily $i.i.d.$) training data.
The controller wants to use the least number of training items---a concept known as the teaching dimension~\citep{Goldman1995Complexity}---to force the learner to learn a target model.
For example, in adversarial learning, an attacker may minimally poison the training data to force a learner to learn a nefarious model~\citep{biggio12-icml,Mei2015Machine}.
Conversely, a defender may immunize the learner by injecting adversarial training examples into the training data~\citep{2014arXiv1412.6572G}.
In education systems, a teacher may optimize the training curriculum to enhance student (modeled as a learning algorithm) learning~\citep{Sen2018Machine,Patil2014Optimal}.

Machine teaching problems are either batch or sequential depending on the learner.
The majority of prior work studied batch machine teaching, where the controller performs one-step control by giving the batch learner an input training \emph{set}.
Modern machine learning, however, extensively employs sequential learning algorithms.
We thus study sequential machine teaching: what is the shortest training sequence to force a learner to go from an initial model $\bw_0$ to some target model $\bw_\star$?
Formally, at time $t=0,1,\ldots$ the controller chooses input $(\bx_t, y_t)$ from an input set $\mathcal U$. The learner then updates the model according to its learning algorithm.  This forms a dynamical system $f$:
\begin{subequations}\label{eq:regression}
	\begin{equation}
	\bw_{t+1} = f(\bw_t, \bx_t, y_t).
	\end{equation}
	The controller has full knowledge of $\bw_0, \bw_\star, f, \mathcal U$, and wants to minimize the terminal time $T$ subject to $\bw_T=\bw_\star$.
	
	As a concrete example, we focus on teaching a gradient descent learner of least squares: 
	\begin{equation}
	f(\bw_t, \bx_t, y_t) = \bw_t - \eta (\bw_t^\tp \bx_t - y_t)\bx_t  
	\end{equation}
\end{subequations}
with $\bw \in \R^n$ and the input set $\norm{\bx}\le R_x, |y|\le R_y$.
We caution the reader not to trivialize the problem: \eqref{eq:regression} is a \emph{nonlinear} dynamical system due to the interaction between $\bw_t$ and $\bx_t$. 
A previous best attempt to solve this control problem by \cite{liu2017iterative} employs a greedy control policy, which at step $t$ optimizes $\bx_t, y_t$ to minimize the distance between $\bw_{t+1}$ and $\bw_\star$. 
One of our observations is that this greedy policy can be substantially suboptimal. 
Figure~\ref{fig:optvsgreedy} shows three teaching problems and the number of steps $T$ to arrive at $\bw_\star$ using different methods.
Our optimal control method NLP found shorter teaching sequences compared to the greedy policy (lengths 151, 153, 259 for NLP vs 219, 241, 310 for GREEDY, respectively).
This and other experiments are discussed in Section~\ref{sec:numerical}.
\begin{figure*}[ht]
	\includegraphics{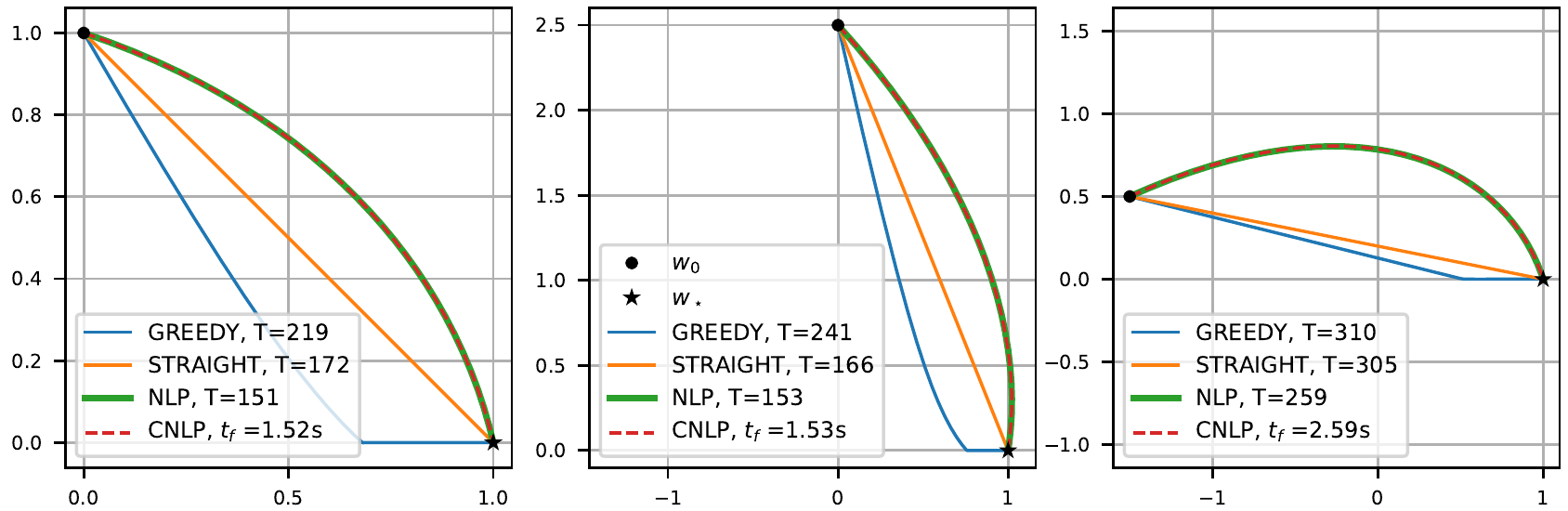}\vspace{-2mm}
	\caption{The shortest teaching trajectories found by different methods.  
		All teaching tasks use the terminal point $\bw_\star=(1,0)$. The initial points used are $\bw_0 = (0,1)$ (left panel), $\bw_0 = (0,2.5)$ (middle panel), and $\bw_0 = (-1.5,0.5)$ (right panel).
		The learner is the least squares gradient descent algorithm~\eqref{eq:regression} with $\eta=0.01$ and $R_x=R_y=1$. 
		Total steps $T$ to arrive at $\bw_\star$ is indicated in the legends.\label{fig:optvsgreedy}}\vspace{-2mm}
\end{figure*}

\newpage
\subsection{Main Contributions}

\vspace{-1mm}
Our main contribution is to show how tools from optimal control theory may be brought to bear on the machine teaching problem. Specifically, we show that:
\begin{enumerate}[itemsep=1mm,topsep=1mm,parsep=1mm]
	\item The Pontryagin optimality conditions reveal deep structural properties of optimal teaching sequences. For example, we show that the least-squares case~\eqref{eq:regression} is fundamentally a 2D problem and we provide a structural characterization of solutions. These results are detailed in Section~\ref{sec:example}.
	
	\item Optimal teaching sequences can be vastly more efficient than what may be obtained via common heuristics. We present two optimal approaches: an exact method (NLP) and a continuous approximation (CNLP). Both agree when the stepsize $\eta$ is small, but CNLP is more scalable because its runtime does not depend on the length of the training sequence. These results are shown in Section~\ref{sec:numerical}.
\end{enumerate}

We begin with a survey of the relevant optimal control theory and algorithms literature in Section~\ref{sec:time-optimal-control}.

\section{TIME-OPTIMAL CONTROL} \label{sec:time-optimal-control}
To study the structure of optimal control we consider the continuous \textit{gradient flow} approximation of gradient descent, which holds in the limit $\eta \rightarrow 0$.
In this section, we present the corresponding canonical time-optimal control problem and summarize some of the key theoretical and computational tools that have been developed over the past 60 years to address it. For a more detailed exposition on the theory, we refer the reader to modern references on the topic \citep{kirk,liberzon,athansfalb}.

This section is self-contained and we will use notation consistent with the control literature ($x$ instead of $\bw$, $u$ instead of $(\bx,y)$, $t_f$ instead of $T$).
We revert back to machine learning notation in section~\ref{sec:example}.
Consider the following boundary value problem:
\begin{equation}\label{eq:oc_ODE}
\dot{x} = f(x,u)
\qquad\text{with }x(0)=x_0\text{ and }x(t_f)=x_f.
\end{equation}
The function $x:\R_+\to\R^n$ is called the \textit{state} and $u:\R_+ \to \mathcal{U}$ is called the \textit{input}. Here, $\mathcal{U}\subseteq \R^m$ is a given constraint set that characterizes admissible inputs. The initial and terminal states $x_0$ and $x_f$ are fixed, but the terminal time $t_f$ is free. If an admissible $u$ together with a state $x$ satisfy the boundary value problem~\eqref{eq:oc_ODE} for some choice of $t_f$, we call $(x,u)$ a \textit{trajectory} of the system. The objective in a time-optimal control problem is to find an \textit{optimal trajectory}, which is a trajectory that has minimal $t_f$.

Established approaches for solving time-optimal control problems can be grouped in three broad categories: dynamic programming, indirect methods, and direct methods. We now summarize each approach.

\subsection{Dynamic Programming}\label{sec:dynprog}

Consider the value function $V:\R^n \to \R_+$, where $V(x)$ is the minimum time required to reach $x_f$ starting at the initial state $x$. The Hamilton--Jacobi--Bellman (HJB) equation gives necessary and sufficient conditions for optimality and takes the form:
\begin{equation}\label{eq:HJB}
\min_{\tilde u \in \mathcal{U}}\,\, \grad V(x)^\tp f(x,\tilde u) + 1 = 0
\qquad\text{for all }x\in\R^n
\end{equation}
together with the boundary condition $V(x_f) = 0$. If the solution to this differential equation is $V_\star$, then the optimal input is given by the minimizer:
\begin{equation}\label{eq:HJB2}
u(x) \in \argmin_{\tilde u \in \mathcal{U}}\,\, \grad V_\star(x)^\tp f(x,\tilde u) 
\quad\text{for all }x\in\R^n
\end{equation}
A nice feature of this solution is that the optimal input $u$ depends on the current state $x$. In other words, HJB produces an optimal \textit{feedback policy}.

Unfortunately, the HJB equation~\eqref{eq:HJB} is generally difficult to solve. Even if the minimization has a closed form solution, the resulting differential equation is often intractable. We remark that the optimal $V_\star$ may not be differentiable. For this reason, one looks for so-called \textit{viscosity solutions}, as described by \citet{liberzon,tonon} and references therein.

Numerical approaches for solving HJB include the fast-marching method \citep{tsitsiklis1995efficient} and Lax--Friedrichs sweeping~\citep{kao2004lax}. The latter reference also contains a detailed survey of other numerical schemes.

\subsection{Indirect Methods} \label{sec:indirect}

Also known as ``optimize then discretize'', indirect approaches start with necessary conditions for optimality obtained via the Pontryagin Maximum Principle (PMP). The PMP may be stated and proved in several different ways, most notably using the Hamiltonian formalism from physics or using the calculus of variations. Here is a formal statement.

\begin{thm}[PMP] \label{thm:PMP}
	Consider the boundary value problem~\eqref{eq:oc_ODE} where $f$ and its Jacobian with respect to $x$ are continuous on $\R^n\times \mathcal{U}$. Define the \textit{Hamiltonian} $H:\R^n\times \R^n\times \mathcal{U} \to \R$ as
	$
	H(x,p,u) \defeq p^\tp f(x,u) + 1
	$.
	If $(x^\star,u^\star)$ is an optimal trajectory, then there exists some function $p^\star :\R_+ \to \R^n$ (called the ``co-state'') such that the following conditions hold.
	\begin{enumerate}[label={\alph*}),itemsep=1mm,topsep=1mm,parsep=1mm]
		\item $x^\star$ and $p^\star$ satisfy the following system of differential equations for $t \in [0,t_f]$ with boundary conditions $x^\star(0)=x_0$ and $x^\star(t_f) = x_f$.
		\begin{subequations}\label{PMPa}
			\begin{align}
			\dot x^\star(t) &= \frac{\partial H}{\partial p}\bigl( x^\star(t),p^\star(t),u^\star(t) \bigr), \\
			\dot p^\star(t) &=  -\frac{\partial H}{\partial x}\bigl( x^\star(t),p^\star(t),u^\star(t) \bigr).
			\end{align}
		\end{subequations}
		\item For all $t\in[0,t_f]$, an optimal input $u^\star(t)$ satisfies:
		\begin{equation}\label{PMPb}
		u^\star(t) \in \argmin_{\tilde u \in \mathcal{U}} \,\,
		H(x^\star(t),p^\star(t),\tilde u).
		\end{equation}
		\item Zero Hamiltonian along optimal trajectories:
		\begin{equation}\label{PMPc}
		H(x^\star(t),p^\star(t),u^\star(t))=0
		\quad\text{for all }t\in[0,t_f].
		\end{equation}
	\end{enumerate}
\end{thm}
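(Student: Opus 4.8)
The plan is to prove the PMP via the classical method of \emph{needle variations} (also called spike variations), which is the approach best suited to extracting the full strength of the minimum condition~\eqref{PMPb} in the presence of the hard constraint $u \in \mathcal{U}$. The overall strategy is to perturb the optimal control on a vanishingly small interval, track how this perturbation propagates to the terminal state through the linearized dynamics, and then argue that time-optimality forces the terminal endpoint to lie on the boundary of the reachable set, at which point a separating hyperplane supplies the co-state. A calculus-of-variations argument that forms an augmented cost with Lagrange multiplier $p$ gives the same equations more quickly, but it only yields stationarity $\partial H/\partial u = 0$ for interior optima, so I would use it as intuition rather than as the main proof.

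First I would fix an optimal trajectory $(x^\star,u^\star)$ and, for a point $\tau \in (0,t_f)$, a value $v \in \mathcal{U}$, and a small $\epsilon > 0$, define the needle variation that replaces $u^\star$ by the constant $v$ on $[\tau-\epsilon,\tau]$ and leaves it unchanged elsewhere. To first order in $\epsilon$, the resulting state at time $\tau$ is displaced by $\epsilon[f(x^\star(\tau),v) - f(x^\star(\tau),u^\star(\tau))]$. I would then propagate this displacement forward using the variational equation $\frac{d}{dt}(\delta x) = \frac{\partial f}{\partial x}(x^\star,u^\star)\,\delta x$, whose state-transition matrix $\Phi(\cdot,\cdot)$ carries the perturbation to the terminal time as $\epsilon\,\Phi(t_f,\tau)[f(x^\star(\tau),v) - f(x^\star(\tau),u^\star(\tau))]$.

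Next I would assemble the cone of all such first-order terminal displacements over all choices of $\tau$ and $v$, together with their finite convex combinations realized by concatenating needles. Time-optimality means no admissible perturbation can reach $x_f$ strictly sooner, so geometrically the improving directions are separated from this reachable cone. I would invoke a separating-hyperplane argument to produce a nonzero terminal co-state $p^\star(t_f)$ normal to the separating hyperplane, and then \emph{define} $p^\star$ on all of $[0,t_f]$ by backward integration of the adjoint equation $\frac{d}{dt}p^\star = -(\partial f/\partial x)^\tp p^\star = -\partial H/\partial x$, which is precisely the second equation of~\eqref{PMPa}; the first equation is just the dynamics restated. Pulling the separation inequality back to time $\tau$ through $\Phi$ and using the definition of $p^\star$ yields $p^\star(\tau)^\tp[f(x^\star(\tau),v) - f(x^\star(\tau),u^\star(\tau))] \ge 0$ for every $v \in \mathcal{U}$, which is exactly~\eqref{PMPb}. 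For the zero-Hamiltonian condition~\eqref{PMPc}, I would first verify that $H(x^\star(t),p^\star(t),u^\star(t))$ is constant in $t$ by differentiating and substituting~\eqref{PMPa}, noting that the $\dot x$ and $\dot p$ contributions cancel because $f$ is autonomous and the $\dot u$ contribution vanishes because $u^\star$ is a pointwise minimizer, and then use the free terminal time to obtain a transversality condition that pins the constant to zero.

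The main obstacle I anticipate is the geometric heart of the argument: establishing that the set of reachable first-order displacements is a convex cone and that optimality genuinely places $x_f$ on its boundary, so that the separating hyperplane exists and is nontrivial. This requires carefully combining multiple needle variations, controlling the higher-order error terms uniformly as $\epsilon \to 0$, and handling the measurability of $u^\star$ so that almost every $\tau$ is a Lebesgue point where the first-order expansion is valid. The remaining steps, namely the adjoint propagation, the pullback of the separation inequality, and the constancy of $H$, are comparatively mechanical once this cone-and-separation structure is in place.
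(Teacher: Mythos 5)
The paper does not prove Theorem~\ref{thm:PMP} at all: it is stated as a classical result imported from the optimal control literature, with the reader referred to \citet{kirk,liberzon,athansfalb}, and the surrounding text only remarks that the PMP ``may be stated and proved in several different ways.'' So there is no in-paper argument to compare yours against; what you have written is a sketch of the standard needle-variation (Boltyanskii/McShane) proof, which is indeed the canonical route to the full minimum condition~\eqref{PMPb} under a hard constraint set $\mathcal{U}$, and it is the proof given in the references the paper cites. Your outline is essentially correct, and you correctly identify the genuine technical core (convexity of the terminal cone built from concatenated needles, uniform control of higher-order errors, and the Lebesgue-point issue for measurable $u^\star$).

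Two places where the sketch glosses over steps that matter for this specific statement. First, for the \emph{time-optimal, free-terminal-time} problem, the terminal cone must be augmented with the temporal variation directions $\pm f(x^\star(t_f),u^\star(t_f))$ before separating; it is this augmentation, not a separate ``transversality condition,'' that forces $p^\star(t_f)^\tp f(x^\star(t_f),u^\star(t_f)) = -1$ and hence pins the constant in~\eqref{PMPc} to zero given the normalization $H = p^\tp f + 1$. Second, your argument for constancy of $H$ by differentiating and claiming ``the $\dot u$ contribution vanishes because $u^\star$ is a pointwise minimizer'' is only a heuristic: $u^\star$ may be discontinuous and nowhere differentiable (bang-bang controls are the norm here, and indeed the paper's Regimes I--V exhibit switching). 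The rigorous version shows that $m(t) \defeq \min_{\tilde u\in\mathcal{U}} H(x^\star(t),p^\star(t),\tilde u)$ is Lipschitz and has zero derivative almost everywhere, then identifies $m(t)$ with $H(x^\star(t),p^\star(t),u^\star(t))$ via~\eqref{PMPb}. With those two repairs the proposal is a faithful reconstruction of the standard proof of the theorem the paper quotes.
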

In comparison to HJB, which needs to be solved for all $x\in\R^n$, the PMP only applies along optimal trajectories. Although the differential equations~\eqref{PMPa} may still be difficult to solve, they are simpler than the HJB equation and therefore tend to be more amenable to both analytical and numerical approaches. Solutions to HJB and PMP are related via $\grad V^\star(x^\star(t)) = p^\star(t)$.

PMP is only necessary for optimality, so solutions of~\eqref{PMPa}--\eqref{PMPc} are not necessarily optimal. Moreover, PMP does not produce a feedback policy; it only produces optimal trajectory \textit{candidates}. Nevertheless, PMP can provide useful insight, as we will explore in Section~\ref{sec:example}.

If PMP cannot be solved analytically, a common numerical approach is the \textit{shooting method}, where we guess $p^\star(0)$, propagate the equations~\eqref{PMPa}--\eqref{PMPb} forward via numerical integration. Then $p^\star(0)$ is refined and the process is repeated until the trajectory reaches $x_f$.

\subsection{Direct Methods}\label{sec:direct}

Also known as ``discretize then optimize'', a sparse nonlinear program is solved, where the variables are the state and input evaluated at a discrete set of timepoints. An example is \textit{collocation methods}, which use different basis functions such as piecewise polynomials to interpolate the state between timepoints. For contemporary surveys of direct and indirect numerical approaches, see \citet{rao2009survey,betts2010practical}.

If the dynamics are already discrete as in~\eqref{eq:regression}, we may directly formulate a nonlinear program. We refer to this approach as NLP. Alternatively, we can take the continuous limit and then discretize, which we call CNLP. We discuss the advantages and disadvantages of both approaches in Section~\ref{sec:numerical}.

\section{TEACHING LEAST SQUARES: INSIGHT FROM PONTRYAGIN} \label{sec:example}

In this section, we specialize time-optimal control to least squares.
To recap, our goal is to find the minimum number of steps $T$ such that there exists a control sequence $(\bx_t,y_t)_{0:T-1}$ that drives the learner~\eqref{eq:regression} with initial state  $\bw_0$ to the target state $\bw_\star$.
The constraint set is $\mathcal{U} = \set{(\bx,y)}{\norm{\bx}\leq R_x,\, |y|\leq R_y}$. 
This is an \emph{nonlinear discrete-time time-optimal control problem}, for which no closed-form solution is available.

On the corresponding continuous-time control problem, applying Theorem~\ref{thm:PMP} we obtain the following necessary conditions for optimality\footnote{State, co-state, and input in Theorem~\ref{thm:PMP} are $(x,p,u)$, which is conventional controls notation. For this problem, we use $(\bw,\bp,(\bx,y))$, which is machine learning notation.} for all $t\in [0,t_f]$.
\begin{subequations}\label{eq:PMP_LS}
	\begin{align}
	\bw(0) &= \bw_0, \quad \bw(t_f) = \bw_\star \\
	\dot\bw(t) &= \bigl( y(t)-\bw(t)^\tp\bx(t) \bigr)\, \bx(t) \label{PMP_QCQP_w}\\
	\dot\bp(t) &= \bigl( \bp(t)^\tp\bx(t) \bigr)\, \bx(t)\label{PMP_QCQP_p} \\
	\bx(t),y(t) &\in \!\!\argmin_{\norm{\hat\bx} \le R_x,\, |\hat y| \le R_y } \!\bigl( \hat y-\bw(t)^\tp \hat\bx \bigr)(\bp(t)^\tp  \hat\bx) \label{PMP_QCQP_u}\\
	0 &= \bigl( y(t)-\bw(t)^\tp \bx(t) \bigr) \bigl( \bp(t)^\tp \bx(t) \bigr) + 1
	\label{PMP_QCQP_eqn}
	\end{align}
\end{subequations}

We can simplify~\eqref{eq:PMP_LS} by setting $y(t) = R_y$, as described in Proposition~\ref{prop:y_simplify} below.

\begin{prop}\label{prop:y_simplify}
	For any trajectory $(\bw,\bp,\bx,y)$ satisfying~\eqref{eq:PMP_LS}, there exist another trajectory of the form $(\bw,\bp,\tilde\bx,R_y)$. So we may set $y(t)=R_y$ without any loss of generality.
\end{prop}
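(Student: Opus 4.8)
The plan is to exploit a sign-flip symmetry of the optimality system~\eqref{eq:PMP_LS}. The input enters the dynamics~\eqref{PMP_QCQP_w}--\eqref{PMP_QCQP_p}, the objective in~\eqref{PMP_QCQP_u}, and the scalar equation~\eqref{PMP_QCQP_eqn} only through combinations like $(y-\bw^\tp\bx)\bx$ and $(\bp^\tp\bx)\bx$, and I would check that every one of these is invariant under the involution $\sigma:(\bx,y)\mapsto(-\bx,-y)$. The feasible set $\set{(\hat\bx,\hat y)}{\norm{\hat\bx}\le R_x,\,|\hat y|\le R_y}$ is likewise invariant under $\sigma$. So whenever a trajectory uses $y=-R_y$, replacing $\bx$ by $-\bx$ simultaneously flips $y$ to $+R_y$ while leaving the rest of the system untouched.

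First I would show that along any PMP trajectory $y(t)\in\{R_y,-R_y\}$. Fixing $\hat\bx=\bx(t)$, the objective in~\eqref{PMP_QCQP_u} is affine in $\hat y$ with slope $\bp(t)^\tp\bx(t)$; equation~\eqref{PMP_QCQP_eqn} forces $\bigl(y-\bw^\tp\bx\bigr)\bigl(\bp^\tp\bx\bigr)=-1\neq0$, hence $\bp(t)^\tp\bx(t)\neq0$, so the minimizing $\hat y$ must lie at an endpoint of $[-R_y,R_y]$. Thus $y(t)=\pm R_y$ for every $t$, and the quantity $\bp(t)^\tp\bx(t)$ never vanishes, making the sign well-defined.

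Next I would define the candidate input by $\tilde\bx(t)=\bx(t)$ at times where $y(t)=R_y$ and $\tilde\bx(t)=-\bx(t)$ where $y(t)=-R_y$, together with $\tilde y(t)\equiv R_y$; in both cases $(\tilde\bx(t),R_y)$ equals either $(\bx(t),y(t))$ or $\sigma(\bx(t),y(t))$. Then I would verify each line of~\eqref{eq:PMP_LS}. Invariance of~\eqref{PMP_QCQP_w}--\eqref{PMP_QCQP_p} under $\sigma$ means $\bw$ and $\bp$ solve exactly the same ODEs with the same initial data, so they are literally unchanged and the boundary conditions $\bw(0)=\bw_0$, $\bw(t_f)=\bw_\star$ continue to hold; the admissibility constraints $\norm{\tilde\bx}=\norm{\bx}\le R_x$ and $|\tilde y|=R_y$ are met; and~\eqref{PMP_QCQP_eqn} holds by invariance of the Hamiltonian.

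The step requiring the most care is the argmin condition~\eqref{PMP_QCQP_u}: I must argue that $(\tilde\bx,R_y)$ is still a \emph{global} minimizer, not merely a point reproducing the same dynamics. This follows because $\sigma$ maps the feasible set onto itself and leaves the objective $\bigl(\hat y-\bw^\tp\hat\bx\bigr)\bigl(\bp^\tp\hat\bx\bigr)$ invariant, so $\sigma$ carries global minimizers to global minimizers; since $(\tilde\bx,R_y)$ is either $(\bx,y)$ itself or its image $\sigma(\bx,y)$, it minimizes in both cases. I expect this invariance argument to be the crux, with the remaining verifications being routine substitutions.
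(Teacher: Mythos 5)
Your proposal is correct and follows essentially the same route as the paper: linearity of the objective in $\hat y$ (with nonzero slope, since~\eqref{PMP_QCQP_eqn} forces $\bp^\tp\bx\neq 0$) pins $y$ to $\pm R_y$, and the sign-flip $(\bx,y)\mapsto(-\bx,-y)$ leaves every equation of~\eqref{eq:PMP_LS} invariant, so one may normalize to $y=R_y$. Your write-up is somewhat more careful than the paper's (explicitly ruling out the zero-slope case and verifying that the involution preserves global minimizers of~\eqref{PMP_QCQP_u}), but the underlying argument is the same.
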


\begin{proof}
	Since~\eqref{PMP_QCQP_u} is linear in $\hat y$, the optimal $\hat y$ occurs at a boundary and $\hat y = \pm R_y$. Changing the sign of $\hat y$ is equivalent to changing the sign of $\hat\bx$, so we may assume without loss of generality that $\hat y = R_y$. These changes leave~\eqref{PMP_QCQP_w}--\eqref{PMP_QCQP_p} and \eqref{PMP_QCQP_eqn} unchanged so $\bw$ and $\bp$ are unchanged as well.
\end{proof}

In fact, Proposition~\ref{prop:y_simplify} holds if we consider trajectories of~\eqref{eq:regression} as well. For a proof, see the appendix.

Applying Proposition~\ref{prop:y_simplify}, the conditions~\eqref{PMP_QCQP_u} and \eqref{PMP_QCQP_eqn} may be combined to yield the following quadratically constrained quadratic program (QCQP) equation.
\begin{eqnarray}\label{eq:QCQP}
\min_{\norm{\bx}\le R_x} (R_y-\bw^\tp \bx)(\bp^\tp  \bx) = -1
\end{eqnarray}
where we have omitted the explicit time specification $(t)$ for clarity.
Note that~\eqref{eq:QCQP} constrains the possible tuples $(\bw,\bp,\bx)$ that can occur as part of an optimal trajectory. So in addition to solving the left-hand side to find $\bx$, we must also ensure that it's equal to $-1$. We will now characterize the solutions of~\eqref{eq:QCQP} by examining five distinct regimes of the solution space that depend on the relationship between $\bw$ and $\bp$ as well as which regime transitions are admissible.

\paragraph{Regime I (Origin): $\bw =0$ and $\bp \ne 0$.} This regime happens when the teaching trajectory pass through the origin. In this regime, one can obtain closed-form solutions. In particular, $\bx = -\frac{R_x}{\norm{\bp}}\bp$ and $\norm{\bp} = \tfrac{1}{R_xR_y}$. In this regime, both $\dot{\bw}$ and $\dot{\bp}$ are positively aligned with $\bp$. Therefore, Regime I necessarily \textit{transitions} from Regime II and into Regime III, given that it is not at the beginning or the end of the teaching trajectory.

\paragraph{Regime II (positive alignment): $\bw = \alpha \bp$ with $\bp \ne 0$ and $\alpha > 0$.} This regime happens when $\bw$ and $\bp$ are positively aligned. Again we have closed form solutions. In particular, $\bx^\star = -\tfrac{R_x}{\norm{\bw}} \bw$ and $\alpha = R_x\norm{\bw}(R_y+R_x\norm{\bw})$. In this regime, both $\dot{\bw}$ and $\dot{\bp}$ are negatively aligned with $\bw$, thus Regime II necessarily transitions into Regime I and can never transition from any other regimes.

\paragraph{Regime III (negative alignment inside the origin-centered ball): $\bw = -\alpha \bp$ with $\bp \ne 0$ and $\alpha > 0$ and $\norm{\bw}\leq \frac{R_y}{2R_x}$.} This regime happens when $\bw$ and $\bp$ are negatively aligned and $\bw$ is inside the ball centered at the origin with radius $R=\frac{R_y}{2R_x}$. Again, closed form solutions exists: $\bx^\star = \tfrac{R_x}{\norm{\bw}} \bw$ and $\alpha = R\norm{\bw}(1-R\norm{\bw})$. Regime III necessarily transitions from Regime I and into Regime IV.

\paragraph{Regime IV (negative alignment out of the origin-centered ball): $\bw = -\alpha \bp$ with $\bp \ne 0$ and $\alpha > 0$ and $\norm{\bw} > \frac{R_y}{2R_x}$.} In this case, the solutions satisfies $\alpha = \tfrac{R_y^2}{4}$ so that $\bp$ is uniquely determined by $\bw$. However, the optimal $\bx^\star$ is \textbf{not} unique. Any solution to $\bw^\tp \bx = \tfrac{R_y}{2}$ with $\norm{\bx} \le R_x$ can be chosen. Regime IV can only transition from Regime III and cannot transition into any other regime. In other word, once the teaching trajectory enters Regime IV, it cannot escape. Another interesting property of Regime IV is that we know exactly how fast the norm of $\bw$ is changing. In particular, knowing $\bw^\tp \bx = \tfrac{R_y}{2}$, one can derive that $\frac{\mathrm{d}\norm{\bw}^2}{\mathrm{d}t} = \frac{R_y^2}{2}$. As a result, once the trajectory enters regime IV, we know exact how long it will take for the trajectory to reach $\bw_\star$, if it is able to reach it.

\paragraph{Regime V (general positions): $\bw$ and $\bp$ are linearly independent.} This case covers the remaining possibilities for the state and co-state variables. To characterize the solutions in this regime, we'll first introduce some new coordinates.
Define $\{\hat\bw,\hat\bu\}$ to be the orthonormal basis for $\mathrm{span}\{\bw,\bp\}$ such that $\bw = \gamma \hat \bw$ and $\bp = \alpha\hat \bw + \beta\hat \bu$ for some $\alpha,\beta,\gamma\in\R$. Note that $\beta\ne 0$ because we assume $\bw$ and $\bp$ are assumed to be linearly independent in this regime.
We can therefore express any input uniquely as $\bx = w \hat\bw + u\hat\bu + z \hat \bz$ where $\hat\bz$ is an \textit{out-of-plane} unit vector orthogonal to both $\hat\bw$ and $\hat\bu$, and $w,u,z\in\R$ are suitably chosen. Substituting these definitions, \eqref{eq:QCQP} becomes
\begin{equation}\label{eq:QCQP_simpler}
\underset{w^2+u^2+z^2 \le R_x^2}{\min}\quad (R_y - \gamma w)( \alpha w + \beta u) = -1.
\end{equation}
Now observe that the objective is linear in $u$ and does not depend on $z$. The objective is linear in $u$ because $\beta\ne 0$ and $(1-\gamma w)\ne 0$ otherwise the entire objective would be zero. Since the feasible set is convex, the optimal $u$ must occur at the boundary of the feasible set of variables $w$ and $u$. Therefore, $z=0$. This is profound, because it implies that in Regime~V, the optimal solution necessarily lies on the 2D plane $\mathrm{span}\{\bw,\bp\}$. In light of this fact, we can pick a more convenient parametrization. Let $w = R_x\cos\theta$ and $u = R_x\sin\theta$. Equation~\eqref{eq:QCQP_simpler} becomes:
\begin{eqnarray}
\min_\theta\quad R_x(R_y-\gamma R_x \cos\theta)(\alpha \cos\theta + \beta\sin\theta ) = -1.
\end{eqnarray}
This objective function has at most four critical points, of which there is only one global minimum, and we can find it numerically. Last but not least, Regime V does not transition from or into any other Regime. 

\begin{figure}[th]
	\centering
	\begin{tikzpicture}
	\tikzstyle{n} = [very thick,circle,inner sep=0mm,minimum width=7mm]
	\tikzstyle{a} = [thick,>=latex,->]
	\def\dx{1.2}
	\def\dy{0.4}
	\node[n,C1,draw=C1] (2) at (0,\dy) {\textbf{\textsf{II}}};
	\node[n,black,draw=black] (1) at (\dx,0) {\textbf{\textsf{I}}};
	\node[n,C3,draw=C3] (3) at (2*\dx,0) {\textbf{\textsf{III}}};
	\node[n,C4,draw=C4] (4) at (3*\dx,0) {\textbf{\textsf{IV}}};
	\node[n,C0,draw=C0] (5) at (5*\dx,0) {\textbf{\textsf{V}}};
	\node[n] (ws) at (4*\dx,\dy) {$\bw_\star$};
	\path[a]
	(2) edge [loop below] (2)
	(2) edge [bend right=10] (1)
	(2) edge [bend left=10] (ws)
	(1) edge (3)
	(3) edge [loop below] (2)
	(3) edge (4)
	(4) edge [loop below] (4)
	(4) edge [bend right=10] (ws)
	(5) edge [loop below] (5)
	(5) edge [bend left=10] (ws);
	\end{tikzpicture}
	\includegraphics{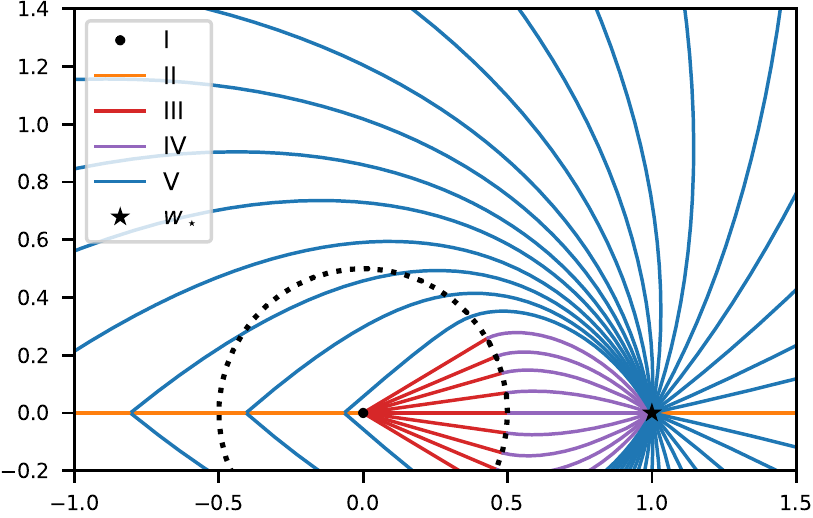}
	\caption{Optimal trajectories for $\bw_\star = (1,0)$ for different choices of $\bw_0$. Trajectories are colored according to the regime to which they belong and the directed graph above shows all possible transitions. The optimal trajectories are symmetric about the $x$-axis. For implementation details, see Section~\ref{sec:numerical}.
		\label{fig:vectorfield}}
\end{figure}

\paragraph{Intrinsic low-dimensional structure of the optimal control solution.} As is hinted in the analysis of Regime~V, the optimal control $\bx$ sometimes lies in the 2D subspace spanned by $\bw$ and $\bp$. In fact, this holds not only for Regime~V but for the whole problem. In particular, we make the following observation.
\begin{thm}\label{thm:2D}
	There always exists a global optimal trajectory of~\eqref{eq:PMP_LS} that lies in a 2D subspace of $\R^n$.
\end{thm}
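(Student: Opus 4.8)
The plan is to produce a single \emph{fixed} two-dimensional subspace that contains an entire optimal trajectory, rather than tracking the instantaneous plane $\mathrm{span}\{\bw,\bp\}$ regime by regime. I would start from a globally optimal trajectory $(\bw,\bp,\bx,y)$ satisfying~\eqref{eq:PMP_LS}, where $\bp$ is the co-state supplied by Theorem~\ref{thm:PMP}, and set $S \defeq \mathrm{span}\{\bw_0,\bp(0)\}$, so that $\dim S \le 2$ and $\bw_0,\bp(0)\in S$. The goal is then to show that (after re-selecting the control only at the instants where it is non-unique) the trajectory never leaves $S$; since $\bw_\star=\bw(t_f)$ would then automatically lie in $S$, this proves the statement.

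The crucial observation is that the QCQP~\eqref{eq:QCQP} constrains only the in-plane part of the control. Decompose any admissible input as $\bx = \bx_\parallel + \bx_\perp$, where $\bx_\parallel$ is the orthogonal projection of $\bx$ onto $S$ and $\bx_\perp\in S^\perp$. Because $\bw(t),\bp(t)\in S$, the objective $(R_y-\bw^\tp\bx)(\bp^\tp\bx)$ depends on $\bx$ only through $\bx_\parallel$, whereas $\norm{\bx}^2=\norm{\bx_\parallel}^2+\norm{\bx_\perp}^2$. Hence for every feasible minimizer there is one with $\bx_\perp=0$ attaining the same value $-1$; that is, at each state $(\bw,\bp)\in S\times S$ an optimal control lying in $S$ exists. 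This is precisely the $z=0$ phenomenon of Regime~V, now asserted for the fixed subspace $S$ instead of the instantaneous plane $\mathrm{span}\{\bw,\bp\}$.

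Selecting such an $S$-valued control yields confinement. From~\eqref{PMP_QCQP_w}--\eqref{PMP_QCQP_p} both $\dot\bw$ and $\dot\bp$ are scalar multiples of the same vector $\bx$, so if $\bx\in S$ then $\dot\bw,\dot\bp\in S$; since $S$ is closed and $\bw_0,\bp(0)\in S$, integrating keeps $\bw(t),\bp(t)\in S$. In fact, outside Regime~IV the minimizer of~\eqref{eq:QCQP} is \emph{unique} and already lies in $\mathrm{span}\{\bw,\bp\}\subseteq S$, so on those portions the trajectory is the unique solution of an ODE whose right-hand side stays in $S$ and therefore cannot escape $S$ (formally, a connectedness argument on $\{t : \bw,\bp\in S \text{ on }[0,t]\}$, or Gr\"onwall's inequality applied to the $S^\perp$-projections $P_{S^\perp}\bw,\,P_{S^\perp}\bp$, which start at zero and satisfy a homogeneous bound using $\norm{\bx}\le R_x$).

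The genuinely delicate point---and the step I expect to require the most care---is reconciling the control re-selection with \emph{global optimality and feasibility}. The only freedom occurs in Regime~IV, where $\bx$ is non-unique, so there I must confirm that the $S$-valued choice still drives the learner to $\bw_\star$ in the \emph{same} minimal time $t_f$, rather than merely satisfying the necessary conditions~\eqref{eq:PMP_LS}. I would argue this from the Regime~IV structure already established: the radial rate $\tfrac{\mathrm{d}\norm{\bw}^2}{\mathrm{d}t}=\tfrac{R_y^2}{2}$ is independent of the tangential part of $\bx$, so the time needed to reach the sphere of radius $\norm{\bw_\star}$ is fixed, while the required reorientation can be realized inside $S$ because $S$ contains both the Regime~IV entry point and $\bw_\star$, and the in-plane (great-circle) rotation is the shortest such reorientation. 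Thus confining the control to $S$ sacrifices neither feasibility nor optimality, and the resulting optimal trajectory lies in the $2$D subspace $S$, completing the argument.
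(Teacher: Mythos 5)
Your overall strategy---dispose of Regime~V by the ``$z=0$'' projection argument and isolate the Regime~IV non-uniqueness as the delicate step---mirrors the paper's, and the Regime~V half of your argument is correct. But the aligned case contains two real problems. First, your fixed subspace $S=\mathrm{span}\{\bw_0,\bp(0)\}$ is the wrong object precisely in the case where it is needed: along the chain II$\,\to\,$I$\,\to\,$III$\,\to\,$IV the state and co-state are parallel at every time, so $\bp(0)$ is parallel to $\bw_0$ and $S$ degenerates to the \emph{line} through $\bw_0$. Re-selecting an $S$-valued control then forces purely radial motion in Regime~IV and can only reach targets on that line; for a generic $\bw_\star$ in the Regime~IV reachable funnel your modified trajectory misses $\bw_\star$ entirely. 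The subspace must be taken to be $\mathrm{span}\{\bw_0,\bw_\star\}$, as the paper does, and you cannot get $\bw_\star\in S$ ``for free'' from the confinement argument: your appeal to ``$S$ contains both the Regime~IV entry point and $\bw_\star$'' assumes the conclusion.

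Second, the sentence ``the in-plane (great-circle) rotation is the shortest such reorientation'' is not a proof; it is the entire non-trivial content of the theorem. Two things must actually be established: (i) the angle swept by \emph{any} admissible Regime~IV control over the time window $[t_0,t_f]$ (whose length is fixed by $\tfrac{\mathrm{d}}{\mathrm{d}t}\norm{\bw}^2=\tfrac{R_y^2}{2}$ and~\eqref{tf}) is bounded by an explicit integral $\theta_{\max}$ depending only on $\norm{\bw(t)}$, obtained by writing $|\dot\theta|$ as the normalized tangential component of $\dot\bw$ and maximizing over $\bx$ subject to $\bw^\tp\bx=R_y/2$ and $\norm{\bx}\le R_x$ (this is~\eqref{a4}); and (ii) every angle $\theta_f\le\theta_{\max}$ is actually attained by an in-plane control, which the paper does constructively by running the explicit max-curvature control until a switching time $t_1$ and then the radial control $\bx=\tfrac{R_y}{2\norm{\bw}^2}\bw$ until $t_f$. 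Without (i) you have not shown that the endpoint of the original, possibly non-planar, optimal trajectory lies in the planar reachable sector; without (ii) you have not shown that a planar trajectory hits it in the same time. Fixing the definition of $S$ and supplying these two computations turns your sketch into the paper's proof.
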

The detailed proof can be found in the appendix.
An immediate consequence of Theorem~\ref{thm:2D} is that if $\bw_0$ and $\bw_\star$ are linearly independent, we only need to consider trajectories that are confined to the subspace $\mathrm{span}\{\bw_0,\bw_\star\}$. When $\bw_0$ and $\bw_\star$ are aligned, trajectories are still 2D, and any subspace containing $\bw_0$ and $\bw_\star$ is equivalent and arbitrary choice can be made.

This insight is extremely important because it enables us to restrict our attention to 2D trajectories even though the dimensionality of the original problem ($n$) may be huge. This allows us to not only obtain a more elegant and accurate solution in solving the necessary condition induced by PMP, but also to parametrize direct and indirect approaches (see Sections~\ref{sec:indirect} and~\ref{sec:direct}) to solve this intrinsically 2D problem more efficiently.

\begin{figure}[th]
	\includegraphics{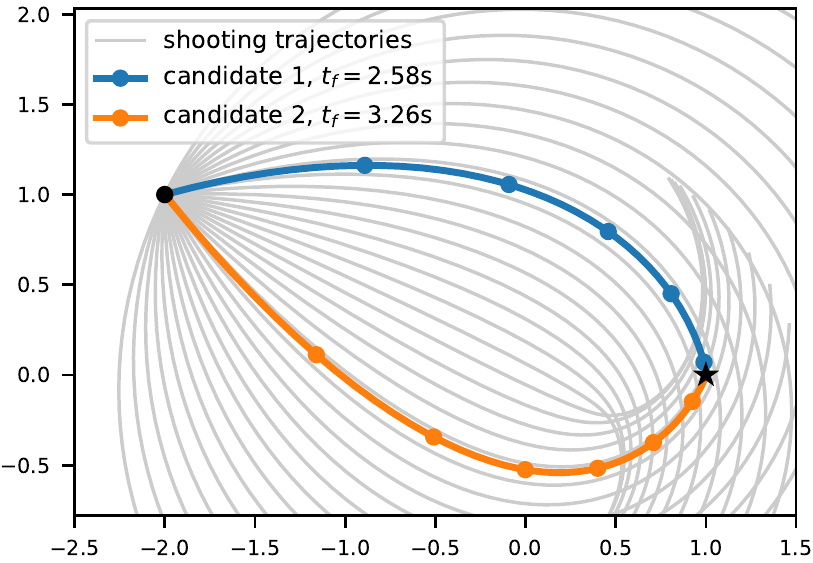}
	\caption{Trajectories found using a shooting approach (Section~\ref{sec:indirect}) with $\bw_0=(-2,1)$ and $\bw_\star=(1,0)$. Gray curves show different shooting trajectories while the blue and orange curves show two trajectories that satisfy the necessary conditions for optimality~\eqref{eq:PMP_LS}. Markers show intervals of $0.5$ seconds, which is roughly 50 steps when using a stepsize of $\eta=0.01$.\label{fig:shooting}}
\end{figure}

\paragraph{Multiplicity of Solution Candidates.} The PMP conditions are only \textit{necessary} for optimality. Therefore, the optimality conditions~\eqref{eq:PMP_LS} need not have a unique solution. We illustrate this phenomenon in Figure~\ref{fig:shooting}. We used a shooting approach (Section~\ref{sec:indirect}) to propagate different choices of $\bp^\star(0)$ forward in time. It turns out two choices lead to trajectories that end at $\bw_\star$, and they do not have equal total times. So in general, PMP identifies \textit{optimal trajectory candidates}, which can be thought of as local minima for this highly nonlinear optimization problem.

\section{NUMERICAL METHODS}
\label{sec:numerical}

While the PMP yields necessary conditions for time-optimal control as detailed in Section~\ref{sec:example}, there is no closed-form solution in general. We now present and discuss four numerical methods: CNLP and NLP are different implementations of time-optimal control, while GREEDY and STRAIGHT are heuristics.

\paragraph{CNLP:} This approach solves the continuous gradient flow limit of the machine teaching problem using a direct approach (Section~\ref{sec:direct}). Specifically, we used the NLOptControl package~\citep{nloptcontrol}, which is an implementation of the $hp$-pseudospectral method GPOPS-II~\citep{GPOPS-II} written in the Julia programming language using the JuMP modeling language~\citep{JuMP} and the IPOPT interior-point solver~\citep{IPOPT}. The main tuning parameters for this software are the integration scheme and the number of mesh points. We selected the trapezoidal integration rule with $100$ mesh points for most simulations. We used CNLP to produce the trajectories in Figures~\ref{fig:optvsgreedy} and~\ref{fig:vectorfield}.

\paragraph{NLP:} A na\"ive approach to optimal control is to find the minimum $T$ for which there is a feasible input sequence to drive the learner to $\bw_\star$.
Fixing $T$, the feasibility subproblem is a nonlinear program over $2T$ $n$-dimensional 
variables $\bx_0, \ldots, \bx_{T-1}$ and $\bw_1, \ldots, \bw_{T}$ constrained by learner dynamics.  Recall 
$\bw_0$ is given,
and one can fix $y_t=R_y$ for all $t$ by Proposition~\ref{prop:y_simplify}.
For our learner~\eqref{eq:regression}, the feasibility problem is
\begin{align}
\min_{\bw_{1:T},\, \bx_{0:T-1}}\qquad & 0 \label{eq:feasibility} \\
\text{s.t.} \qquad
& \bw_T = \bw_\star \nonumber\\
& \bw_{t+1} =\bw_t - \eta (\bw_t^\tp\bx_t - R_y)\bx_t \nonumber\\
& \norm{\bx_t} \le R_x, \quad \forall t=0, \ldots, T-1. \nonumber
\end{align}
As in the CNLP case, we modeled and solved the  subproblems~\eqref{eq:feasibility} using JuMP and IPOPT. We also tried Knitro, a state-of-the-art commercial solver \citep{KNITRO}, and it produced similar results. We stress that such feasibility problems are difficult; IPOPT and Knitro can handle moderately sized $T$.
For our specific learner~\eqref{eq:regression} there are 2D optimal control and state trajectories in $\mathrm{span}\{\bw_0,\bw_\star\}$ as discussed in Section~\ref{sec:example}. Therefore, we reparameterized~\eqref{eq:feasibility} to work in 2D.

On top of this, we run a binary search over positive integers to find the minimum $T$ for which the subproblem~\eqref{eq:feasibility} is feasible.
Subject to solver numerical stability, the minimum $T$ and its feasibility solution $\bx_0, \ldots, \bx_{T-1}$ is the time-optimal control.
While NLP is conceptually simple and correct, it requires solving many subproblems with $2T$ variables and $2T$ constraints, making it less stable and scalable than CNLP.

\paragraph{GREEDY:}
We restate the greedy control policy initially proposed by \cite{liu2017iterative}. It has the advantage of being computationally more efficient and readily applicable to different learning algorithms (i.e. dynamics).
Specifically for the least squares learner~\eqref{eq:regression} and given the current state $\bw_t$, GREEDY solves the following optimization problem to determine the next teaching example $(\bx_t,y_t)$: 
\begin{align}
\min_{(\bx_t,y_t)\in \mathcal{U}} \qquad &\norm{\bw_{t+1} -\bw_\star}^2\\
\mbox{s.t.} \qquad & \bw_{t+1} = \bw_t - \eta (\bw_t^\tp\bx_t - y_t) \bx_t. \nonumber
\end{align}
The procedure repeats until $\bw_{t+1} = \bw_\star$. We used the \MATLAB function \texttt{fmincon} to solve the above quadratic program iteratively.
We point out that the optimization problem is not convex.
Moreover, $\bw_{t+1}$ does not necessarily point in the direction of $\bw_\star$. This is evident in Figure~\ref{fig:optvsgreedy} and Figure~\ref{fig:pacman}. 

\paragraph{STRAIGHT:}
We describe an intuitive control policy: at each step, move $\bw$ in straight line toward $\bw_\star$ as far as possible subject to the constraint $\mathcal{U}$.
This policy is less greedy than GREEDY because it may not reduce $\norm{\bw_{t+1} -\bw_\star}^2$ as much at each step.
The per-step optimization in $\bx$ is a 1D line search:
\begin{align}\label{eq:straight}
\min_{a,y_t \in \R} \qquad&\norm{\bw_{t+1} -\bw_\star}^2\\
\mbox{s.t.} \qquad & \bx_t = a (\bw_\star - \bw_t)/\|\bw_\star - \bw_t\| \nonumber \\
& (\bx_t,y_t)\in \mathcal{U} \nonumber \\
& \bw_{t+1} = \bw_t - \eta (\bw_t^\tp\bx_t - y_t) \bx_t. \nonumber
\end{align}
The line search~\eqref{eq:straight} can be solved in closed-form. In particular, one can obtain that
\begin{equation*}
a= 
\begin{cases}
\min\{R_x,\frac{R_y\|\bw_\star-\bw\|}{2(\bw_\star-\bw)^\tp\bw}\},& \text{if } (\bw_\star-\bw)^\tp\bw>0\\
R_x,              & \text{otherwise.}
\end{cases}
\end{equation*}

\subsection{Comparison of Methods}

\begin{figure*}[tb]
	\includegraphics{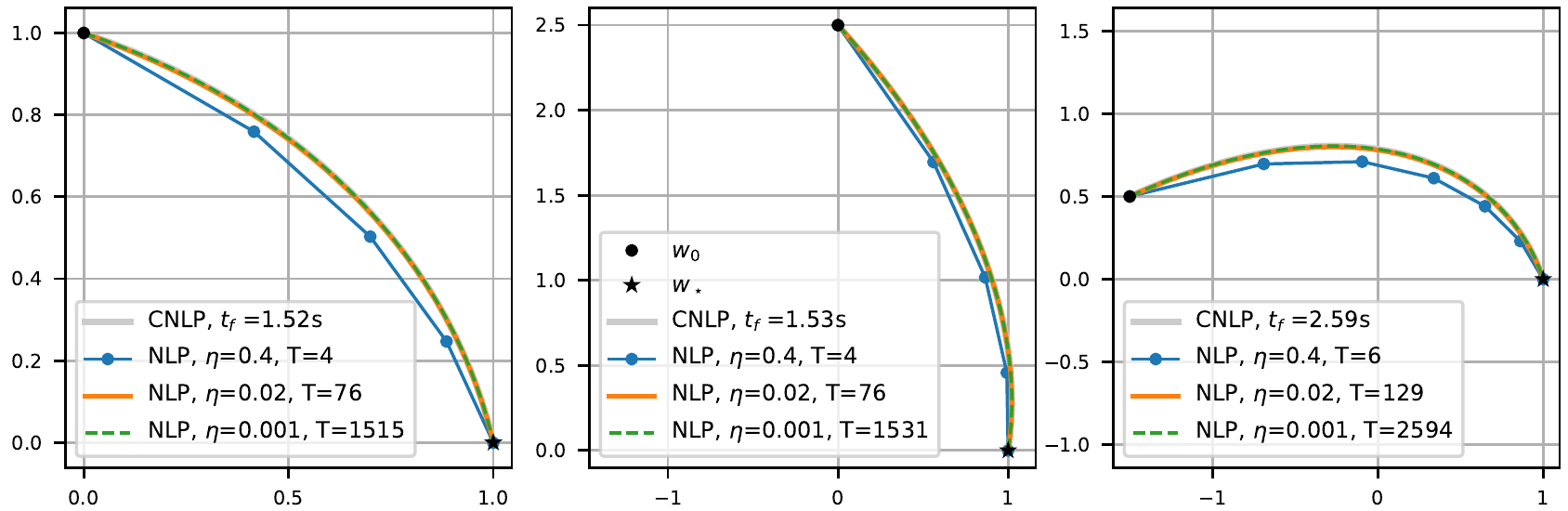}\vspace{-2mm}
	\caption{Comparison of CNLP vs NLP. All teaching tasks use the terminal point $\bw_\star=(1,0)$. The initial points used are 
		$\bw_0 = (0,1)$ (left panel), $\bw_0 = (0,2.5)$ (middle panel), and $\bw_0 = (-1.5,0.5)$ (right panel). We observe that the NLP trajectories on learners with smaller $\eta$'s quickly converges to the CNLP trajectory.\label{fig:convergence}}\vspace{-2mm}
\end{figure*}

We ran a number of experiments to study the behavior of these numerical methods.
In all experiments, the learner is gradient descent on least squares~\eqref{eq:regression}, and the control constraint set is $\norm{\bx}\le 1, |y|\le 1$.
Our first observation is that CNLP has a number of advantages:
\begin{enumerate}
	\item CNLP's continuous optimal state trajectory matches NLP's discrete state trajectories, especially on learners with small $\eta$.
	This is expected, since the continuous optimal control problem is obtained asymptotically from the discrete one as $\eta \rightarrow 0$.
	Figure~\ref{fig:convergence} shows the teaching task $\bw_0=(1, 0) \Rightarrow \bw_* = (1,0)$.  Here we compare CNLP with  NLP's optimal state trajectories on four gradient descent learners with different $\eta$ values.
	The NLP optimal teaching sequences vary drastically in length $T$, but their state trajectories quickly overlap with CNLP's optimal trajectory.
	
	\item CNLP is quick to compute, while NLP runtime grows as the learner's $\eta$ decreases.
	Table~\ref{tab:dataset} presents the wall clock time.  With a small $\eta$, the optimal control takes more steps (larger $T$).  Consequently, NLP must solve a nonlinear program with more variables and constraints.  In contrast, CNLP's runtime does not depend on $\eta$.
	
	\item CNLP can be used to approximately compute the ``teaching dimension'', i.e. the minimum number of sequential teaching steps $T$ for the discrete problem.
	Recall CNLP produces an optimal terminal time $t_f$.  When the learner's $\eta$ is small, the discrete ``teaching dimension'' $T$ is related by $T \approx t_f / \eta$.
	This is also supported by Table~\ref{tab:dataset}.
	
\end{enumerate}
That said, it is not trivial to extract a discrete control sequence from CNLP's continuous control function.
This hinders CNLP's utility as an optimal teacher.

\begin{table}[ht]
	\caption{Teaching sequence length and wall clock time comparison. NLP teaches three learners with different $\eta$'s.  Target is always $\bw_\star=(1,0)$. All experiments were performed on a conventional laptop.}
	\vspace{1mm}
	\label{tab:dataset}
	\begin{tabularx}{\columnwidth}{ c | r r r | r }
		\toprule
		&\multicolumn{3}{c|}{\textbf{NLP}} & \textbf{CNLP} \\
		$\bw_0$ & $\eta=0.4$ & 0.02 & 0.001 &  \\
		\midrule
		$(0,1)$ & $T=3$ & 75 & 1499 & $t_f=1.52\mathrm{s}$\\
		&  0.013s & 0.14s & 59.37s & 4.1s\\
		\hline
		$(0,2.5)$ & $T=5$ & 76 & 1519 & $t_f=1.53\mathrm{s}$\\
		&  0.008s & 0.11s & 53.28s & 2.37s\\
		\hline
		$(-1.5,0.5)$ & $T=6$ & 128 & 2570 & $t_f=2.59\mathrm{s}$\\
		&  0.012s & 0.63s & 310.08s & 2.11s\\
		\hline	
	\end{tabularx}
	\vspace{-2mm}
\end{table}

\begin{table}[ht]
	\caption{Comparison of teaching sequence length $T$. We fixed $\eta=0.01$ in all cases.}
	\vspace{1mm}
	\label{tab:Tcompare}
	\begin{tabularx}{\columnwidth}{ c c | c  c  c }
		\toprule
		$\bw_0$ & $\bw_\star$ &\textbf{NLP} & \!\!\textbf{STRAIGHT}\!\! & \textbf{GREEDY} \\
		\midrule
		$(0,1)$ & $(2,0)$ & 148 & 161 & 233 \\
		$(0,2)$ & $(4,0)$ & 221 & 330 & 721 \\
		$(0,4)$ & $(8,0)$ & 292 & 867 & 2667 \\
		$(0,8)$ & $(16,0)$ & 346 & 2849 & 10581 \\
		\hline
	\end{tabularx}
	\vspace{-2mm}
\end{table}

\begin{figure}[th]
	\includegraphics{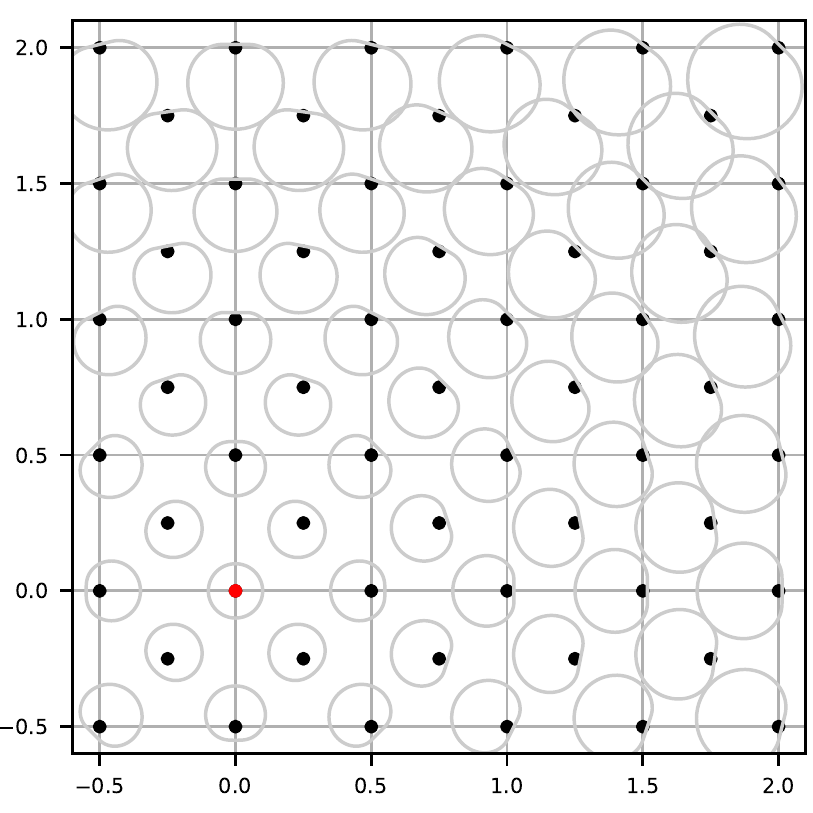}
	\vspace{-6mm}
	\caption{Points reachable in one step of gradient descent (with $\eta = 0.1$) on a least-squares objective starting from each of the black dots. 
		%Some contours are nonconvex 
		There is circular symmetry about the origin (red dot).
		\label{fig:pacman_grid}}
\end{figure}

\begin{figure}[ht]
	\includegraphics{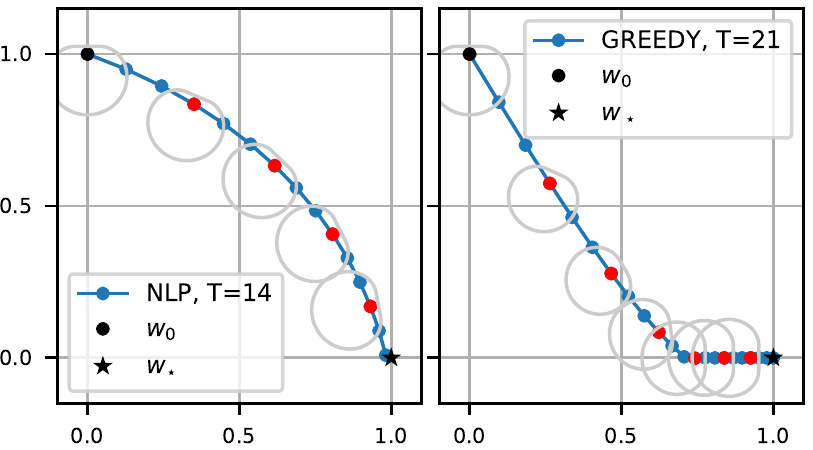}
	\vspace{-3mm}
	\caption{Reachable sets along the trajectory of NLP (left panel) and GREEDY (right panel). To minimize clutter, we only show every $3^\text{rd}$ reachable set. For this simulation, we used $\eta=0.1$. The greedy approach makes fast progress initially, but slows down later on.
		\label{fig:pacman}}
\end{figure}

\begin{figure}[!h]
	\vspace{-0mm}
	\includegraphics{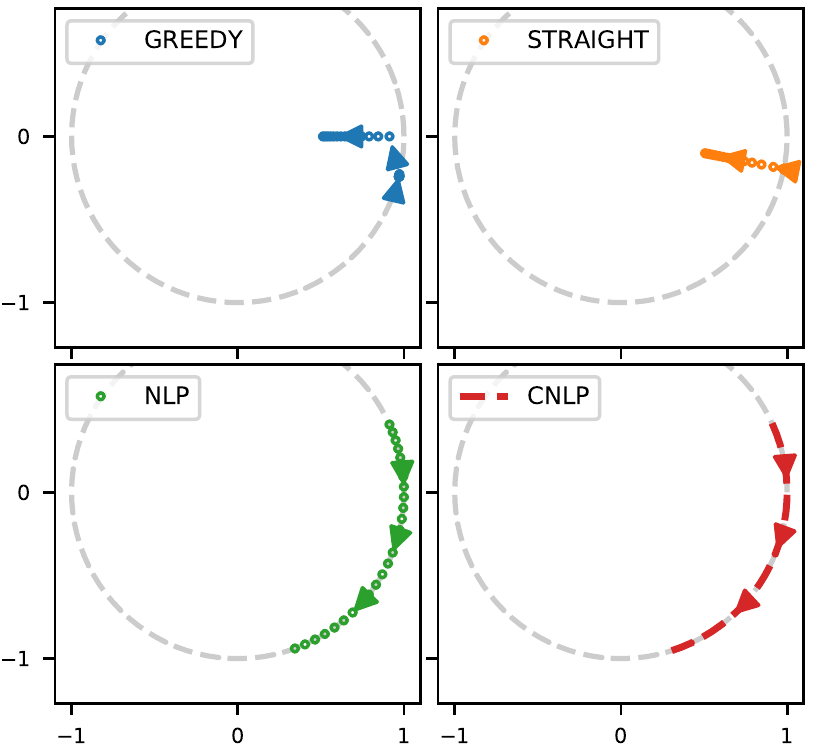}
	\vspace{-6mm}
	\caption{Trajectories of the input sequence $\{\bx_t\}$ for GREEDY, STRAIGHT, and NLP methods and the corresponding $\bx(t)$ for CNLP. The teaching task is $\bw_0 = (-1.5, 0.5)$, $\bw_\star = (1,0)$, and $\eta=0.01$. Markers show every 10 steps. Input constraint is $\norm{\bx}\le 1$.\label{fig:control}}
\end{figure}

Our second observation is that NLP, being the discrete-time optimal control, produces shorter teaching sequences than GREEDY or STRAIGHT.
This is not surprising, and we have already presented three teaching tasks in Figure~\ref{fig:optvsgreedy} where NLP has the smallest $T$. In fact, there exist teaching tasks on which GREEDY and STRAIGHT can perform arbitrarily worse than the optimal teaching sequence found by NLP. A case study is presented in Table~\ref{tab:Tcompare}. In this set of experiments, we set $\bw_0 = (a,0)$ and $\bw_\star = (0,2a)$. As $a$ increases, the ratio of teaching sequence length between STRAIGHT and NLP and between GREEDY and NLP grow at an exponential rate.

We now dig deeper and present an intuitive explanation of why GREEDY requires more teaching steps than NLP. The fundamental issue is the nonlinearity of the learner dynamics~\eqref{eq:regression} in $\bx$.
For any $\bw$ let us define the one-step reachable set $\set{\bw - \eta (\bw^\tp \bx - y)\bx}{(\bx, y) \in \mathcal U }$.
Figure~\ref{fig:pacman_grid} shows a sample of such reachable sets.
The key observation is that the starting $\bw$ is quite close to the boundary of most reachable sets. In other words, there is often a compressed direction---from $\bw$ to the closest boundary of $\mathcal U$---along which $\bw$ makes minimal progress. The GREEDY scheme falls victim to this phenomenon.

Figure~\ref{fig:pacman} compares NLP and GREEDY on a teaching task chosen to have short teaching sequences in order to minimize clutter. GREEDY starts by eagerly descending a slope and indeed this quickly brings it closer to $\bw_\star$.
Unfortunately, it also arrived at the $x$-axis.  For $\bw$ on the $x$-axis, the compressed direction is horizontally outward.
Therefore, subsequent GREEDY moves are relatively short, leading to a large number of steps to reach $\bw_\star$. Interestingly, STRAIGHT is often better than GREEDY because it also avoids the $x$-axis compressed direction for general $\bw_0$.

We illustrate the optimal inputs in Figure~\ref{fig:control}, which compares $\{\bx_t\}$ produced by STRAIGHT, GREEDY, and NLP and the $\bx(t)$ produced by CNLP. The heuristic approaches eventually take smaller-magnitude steps as they approach $\bw_\star$ while NLP and CNLP maintain a maximal input norm the whole way.

\section{CONCLUDING REMARKS}

Techniques from optimal control are under-utilized in machine teaching, yet they have the power to provide better quality solutions as well as useful insight into their structure.

As seen in Section~\ref{sec:example}, optimal trajectories for the least squares learner are fundamentally 2D. Moreover, there is a taxonomy of regimes that dictates their behavior.
We also saw in Section~\ref{sec:numerical} that the continuous CNLP solver can provide a good approximation to the true discrete trajectory when $\eta$ is small. CNLP is also more scalable than simply solving the discrete NLP directly because NLP becomes computationally intractable as $T$ gets large (or $\eta$ gets small), whereas the runtime of CNLP is independent of $\eta$.

A drawback of both NLP and CNLP is that they produce \textit{trajectories} rather than \textit{policies}. In practice, using an open-loop teaching sequence $(\bx_t,y_t)$ will not yield the $\bw_t$ we expect due to the accumulation of small numerical errors as we iterate. In order to find a control policy, which is a map from state $\bw_t$ to input $(\bx_t,y_t)$, we discussed the possibility of solving HJB (Section~\ref{sec:dynprog}) which is computationally expensive.

An alternative to solving HJB is to pre-compute the desired trajectory via CNLP and then use \textit{model-predictive control} (MPC) to find a policy that tracks the reference trajectory as closely as possible. Such an approach is used in~\citet{MPC}, for example, to design controllers for autonomous race cars, and would be an interesting avenue of future work for the machine teaching problem.

Finally, this paper presents only a glimpse at what is possible using optimal control. For example, the PMP is not restricted to merely solving time-optimal control problems. It is possible to analyze problems with state- and input-dependent running costs, state and input pointwise or integral constraints, conditional constraints, and even problems where the goal is to reach a target \textit{set} rather than a target point.

% suppressed for submission
%\subsubsection*{Acknowledgements}
%Acknowledge funding, etc.

\newpage
\bibliographystyle{abbrvnat}
\bibliography{teaching_control,z}

\begin{thebibliography}{23}
\providecommand{\natexlab}[1]{#1}
\providecommand{\url}[1]{\texttt{#1}}
\expandafter\ifx\csname urlstyle\endcsname\relax
  \providecommand{\doi}[1]{doi: #1}\else
  \providecommand{\doi}{doi: \begingroup \urlstyle{rm}\Url}\fi

\bibitem[Athans and Falb(2013)]{athansfalb}
M.~Athans and P.~L. Falb.
\newblock \emph{Optimal control: An introduction to the theory and its
  applications}.
\newblock Courier Corporation, 2013.

\bibitem[Betts(2010)]{betts2010practical}
J.~T. Betts.
\newblock \emph{Practical methods for optimal control and estimation using
  nonlinear programming}, volume~19.
\newblock SIAM, 2010.

\bibitem[Biggio et~al.(2012)Biggio, Nelson, and Laskov]{biggio12-icml}
B.~Biggio, B.~Nelson, and P.~Laskov.
\newblock Poisoning attacks against support vector machines.
\newblock In J.~Langford and J.~Pineau, editors, \emph{29th
  Int{\textquoteright}l Conf. on Machine Learning (ICML)}. Omnipress,
  Omnipress, 2012.

\bibitem[Byrd et~al.(2006)Byrd, Nocedal, and Waltz]{KNITRO}
R.~H. Byrd, J.~Nocedal, and R.~A. Waltz.
\newblock Knitro: An integrated package for nonlinear optimization.
\newblock In \emph{Large Scale Nonlinear Optimization, 35–59, 2006}, pages
  35--59. Springer Verlag, 2006.

\bibitem[Dunning et~al.(2017)Dunning, Huchette, and Lubin]{JuMP}
I.~Dunning, J.~Huchette, and M.~Lubin.
\newblock Jump: A modeling language for mathematical optimization.
\newblock \emph{SIAM Review}, 59\penalty0 (2):\penalty0 295--320, 2017.
\newblock \doi{10.1137/15M1020575}.

\bibitem[Febbo(2017)]{nloptcontrol}
H.~Febbo.
\newblock Nloptcontrol.jl, 2017.
\newblock URL \url{:\\https://github.com/JuliaMPC/NLOptControl.jl}.

\bibitem[Goldman and Kearns(1995)]{Goldman1995Complexity}
S.~Goldman and M.~Kearns.
\newblock On the complexity of teaching.
\newblock \emph{Journal of Computer and Systems Sciences}, 50\penalty0
  (1):\penalty0 20--31, 1995.

\bibitem[{Goodfellow} et~al.(2014){Goodfellow}, {Shlens}, and
  {Szegedy}]{2014arXiv1412.6572G}
I.~J. {Goodfellow}, J.~{Shlens}, and C.~{Szegedy}.
\newblock {Explaining and Harnessing Adversarial Examples}.
\newblock \emph{ArXiv e-prints}, Dec. 2014.

\bibitem[Kao et~al.(2004)Kao, Osher, and Qian]{kao2004lax}
C.~Y. Kao, S.~Osher, and J.~Qian.
\newblock {Lax--Friedrichs} sweeping scheme for static {Hamilton--Jacobi}
  equations.
\newblock \emph{Journal of Computational Physics}, 196\penalty0 (1):\penalty0
  367--391, 2004.

\bibitem[Kirk(2012)]{kirk}
D.~E. Kirk.
\newblock \emph{Optimal control theory: An introduction}.
\newblock Courier Corporation, 2012.

\bibitem[Liberzon(2011)]{liberzon}
D.~Liberzon.
\newblock \emph{Calculus of variations and optimal control theory: A concise
  introduction}.
\newblock Princeton University Press, 2011.

\bibitem[Liniger et~al.(2015)Liniger, Domahidi, and Morari]{MPC}
A.~Liniger, A.~Domahidi, and M.~Morari.
\newblock Optimization-based autonomous racing of 1:43 scale {RC} cars.
\newblock \emph{Optimal Control Applications and Methods}, 36\penalty0
  (5):\penalty0 628--647, 2015.

\bibitem[Liu et~al.(2017)Liu, Dai, Humayun, Tay, Yu, Smith, Rehg, and
  Song]{liu2017iterative}
W.~Liu, B.~Dai, A.~Humayun, C.~Tay, C.~Yu, L.~B. Smith, J.~M. Rehg, and
  L.~Song.
\newblock Iterative machine teaching.
\newblock In \emph{International Conference on Machine Learning}, pages
  2149--2158, 2017.

\bibitem[Mei and Zhu(2015)]{Mei2015Machine}
S.~Mei and X.~Zhu.
\newblock Using machine teaching to identify optimal training-set attacks on
  machine learners.
\newblock In \emph{The Twenty-Ninth AAAI Conference on Artificial
  Intelligence}, 2015.

\bibitem[Patil et~al.(2014)Patil, Zhu, Kopec, and Love]{Patil2014Optimal}
K.~Patil, X.~Zhu, L.~Kopec, and B.~Love.
\newblock Optimal teaching for limited-capacity human learners.
\newblock In \emph{Advances in Neural Information Processing Systems (NIPS)},
  2014.

\bibitem[Patterson and Rao(2014)]{GPOPS-II}
M.~A. Patterson and A.~V. Rao.
\newblock {GPOPS-II}: A {MATLAB} software for solving multiple-phase optimal
  control problems using hp-adaptive {Gaussian} quadrature collocation methods
  and sparse nonlinear programming.
\newblock \emph{ACM Transactions on Mathematical Software (TOMS)}, 41\penalty0
  (1):\penalty0 1, 2014.

\bibitem[Rao(2009)]{rao2009survey}
A.~V. Rao.
\newblock A survey of numerical methods for optimal control.
\newblock \emph{Advances in the Astronautical Sciences}, 135\penalty0
  (1):\penalty0 497--528, 2009.

\bibitem[Sen et~al.(2018)Sen, Patel, Rau, Mason, Nowak, Rogers, and
  Zhu]{Sen2018Machine}
A.~Sen, P.~Patel, M.~A. Rau, B.~Mason, R.~Nowak, T.~T. Rogers, and X.~Zhu.
\newblock Machine beats human at sequencing visuals for perceptual-fluency
  practice.
\newblock In \emph{Educational Data Mining}, 2018.

\bibitem[Tonon et~al.(2017)Tonon, Aronna, and Kalise]{tonon}
D.~Tonon, M.~S. Aronna, and D.~Kalise.
\newblock \emph{Optimal control: Novel directions and applications}, volume
  2180.
\newblock Springer, 2017.

\bibitem[Tsitsiklis(1995)]{tsitsiklis1995efficient}
J.~N. Tsitsiklis.
\newblock Efficient algorithms for globally optimal trajectories.
\newblock \emph{IEEE Transactions on Automatic Control}, 40\penalty0
  (9):\penalty0 1528--1538, 1995.

\bibitem[W\"{a}chter and Biegler(2006)]{IPOPT}
A.~W\"{a}chter and L.~T. Biegler.
\newblock On the implementation of a primal-dual interior point filter line
  search algorithm for large-scale nonlinear programming.
\newblock \emph{Mathematical Programming}, 106\penalty0 (1):\penalty0 25--57,
  2006.

\bibitem[Zhu(2015)]{Zhu2015Machine}
X.~Zhu.
\newblock Machine teaching: an inverse problem to machine learning and an
  approach toward optimal education.
\newblock In \emph{The Twenty-Ninth AAAI Conference on Artificial Intelligence
  (AAAI ``Blue Sky'' Senior Member Presentation Track)}, 2015.

\bibitem[Zhu et~al.(2018)Zhu, Singla, Zilles, and Rafferty]{Zhu2018Overview}
X.~Zhu, A.~Singla, S.~Zilles, and A.~N. Rafferty.
\newblock {An Overview of Machine Teaching}.
\newblock \emph{ArXiv e-prints}, Jan. 2018.
\newblock https://arxiv.org/abs/1801.05927.

\end{thebibliography}

\newpage
\section{Appendix}

\paragraph{Proof of modified Proposition~\ref{prop:y_simplify}.}

In this version, we assume $(\bw,\bx,y)$ is a trajectory of~\eqref{eq:regression} rather than being a trajectory of~\eqref{eq:PMP_LS}.

All we need to show is that for any pair of $(\bx,y)$, there exist another pair $(\tilde\bx,R_y)$, such that they give the same update. In particular, we set $\tilde\bx=a\bx$ and show that there always exists an $a\in[-1,1]$ such that
\begin{equation*}
(y-\bw^\tp\bx)\bx = (R_y-\bw^\tp a\bx)a\bx.
\end{equation*}
This simplifies to
\begin{equation}\label{a2}
g(a) \defeq (\bw^\tp\bx) a^2 - R_y a +(y-\bw^\tp\bx)=0.
\end{equation}
The discriminant of the quadratic~\eqref{a2} is
\begin{align*}
R_y^2 - 4\bw^\tp\bx(y-\bw^\tp\bx)
&\ge R_y^2 - 4|\bw^\tp\bx|\left( R_y + |\bw^\tp\bx| \right) \\
&= \left( R_y - 2|\bw^\tp \bx| \right)^2 \ge 0
\end{align*}
So there always exists a solution $a\in\R$. Moreover, $g(-1) = R_y+y\geq 0 $ and $g(1) = -R_y+y\leq 0$, so there must be a real root in $[-1,1]$. \qedhere

\paragraph{Proof of Theorem~\ref{thm:2D}.} We showed in Section~\ref{sec:example} that Regime~V trajectories are 2D. We also argued that solutions that reach $\bw_\star$ via Regime~III--IV are not unique and need not be 2D. We will now show that it's always possible to construct a 2D solution.

We begin by characterizing the set of $\bw_\star$ reachable via Regime~III--IV. Recall from Section~\ref{sec:example} that the transition between III and IV occurs when $\norm{\bw} = R \defeq \tfrac{R_y}{2R_x}$. If $t_0$ is the time at which this transition occurs, then for $0 \le t \le t_0$, the solution is $\bx = \frac{R_x}{\norm{\bw}}\bw$, which leads to a straight-line trajectory from $\bw_0$ to $\bw(t_0)$.

Now consider the part of the trajectory in Regime~IV, where $t_0 \le t \le t_f$. As derived in Section~\ref{sec:example}, Regime~IV trajectories satisfy $\dot\bw = \bw^\tp \bx = \tfrac{R_y}{2}$. These lead to $\frac{\mathrm{d}\norm{\bw}^2}{\mathrm{d}t} = \frac{R_y^2}{2}$, which means that $\norm{\bw}$ grows at the same rate regardless of $\bx$. If our trajectory reaches $\bw(t_f) = \bw_\star$, then we can deduce via integration that
\begin{equation}\label{tf}
\norm{\bw_\star}^2 - \norm{\bw(t_0)}^2 = \tfrac{R_y^2}{2}(t_f-t_0),
\end{equation}
Suppose $(\bw(t),\bx(t))$ for $t_0 \le t \le t_f$ is a trajectory that reaches $\bw_\star$. Refer to Figure~\ref{fig:reachable}. The reachable set at time $t_f$ is a spherical sector whose boundary requires a trajectory that maximizes curvature. We will now derive this fact.

\begin{figure}[ht]
\centering
\begin{tikzpicture}[very thick,>=latex]
\def\t{25} % angle of trajectory
\def\RA{1} % radius for w0
\def\R{2}  % radius for w(t0)
\def\RR{4.5} % radius for w(tf)=wstar
\def\RRA{16} % angle for wstar
\def\FR{2.68} % funnel length  <<<<<< KLUDGE. CHANGE MANUALLY.
\def\FA{32} % funnel angle
\def\HFA{16} % funnel half-angle
\def\RAtmp{19} % angle for new soln <<<<<< KLUDGE. CHANGE MANUALLY.

% draw ray through Wstar
\draw[black!20] (0,0) -- ({5*cos(\RRA)},{5*sin(\RRA)});

% draw Regime III
\coordinate (W1) at ({\R*cos(\t)},{\R*sin(\t)});
\draw[C3] (0,0) -- node[pos=0.8,anchor=south east]{\footnotesize{\textsf{\textbf{III}}}} (W1);

% draw Regime IV
\coordinate (P1) at ($ (W1)+ ({\FR*cos(\t-\FA)},{\FR*sin(\t-\FA)}) $);
\coordinate (P2) at ($ (W1) + ({\FR*cos(\t+\FA)},{\FR*sin(\t+\FA)}) $);
\path[fill=C4!30, name path = lowcurve] (W1) to [bend left = \FA] (P1)
	to [bend right= \HFA] (P2)
	to [bend left =\FA] (W1) -- cycle;

\draw[C4,densely dotted] (W1) to [bend left = \FA] (P1);
\draw[C4,densely dotted] (W1) to [bend right = \FA]
	node[pos=0.5,anchor=south east]{\footnotesize{\textsf{\textbf{IV}}}} (P2);

% 3D illusion!
\draw[C4!80,densely dotted] (P1) to [bend left = 6] (P2);
\draw[C4!50,densely dotted] (P1) to [bend right = 6] (P2);

% draw solution in Regime IV
\coordinate (Wstar) at ({\RR*cos(\RRA)},{\RR*sin(\RRA)});
%\draw[C4] (W1) to [bend left = \RRA] (Wstar);
% intersection of radial trajectory
\path[name path = radial] (0,0) to (Wstar);
\path [name intersections={of=lowcurve and radial,by=W2}];
\draw[C4] (W1) to [bend left = \RAtmp] (W2) -- (Wstar);
\node[anchor=north,shift={(-0.1,0)}] at (W2) {\footnotesize $\bw(t_1)$};

% draw axes and circular dotted arcs
\draw[black!20] (\RR,0) arc (0:50:\RR);
\draw[dotted] (\R,0) arc (0:90:\R);
\draw[<->] (0,3.5) -- (0,0) -- node[anchor=north east,pos=0]{0} (6,0);
\node[anchor=north] at (\R,0) {$R$};
\node[anchor=north] at (\RR,0) {$\norm{\bw_\star}$};

% draw start point
\coordinate (W0) at ({\RA*cos(\t)},{\RA*sin(\t)});
\node[circle,fill=black,inner sep=1.2] at (W0) {};
\node[anchor=south east,shift={(0,-0.1)}] at (W0) {\footnotesize $\bw_0$};
\node[anchor=north west,shift={(0,0.18)}] at (W1) {\footnotesize $\bw(t_0)$};

% draw end point
\node[fill=black,star,star points=5,star point ratio=3,inner sep=0.8] at (Wstar) {};
\node[anchor=north west,shift={(0,0.15)}] at (Wstar) {\footnotesize $\bw(t_f) = \bw_\star$};

\end{tikzpicture}

\vspace{-3mm}
\caption{If a reachable $\bw_\star$ is contained in the concave funnel shape, which is the reachable set in Regime~IV, it can be reached by some trajectory $(\bw(t),\bx(t))$ lying entirely in the 2D subspace defined by $\mathrm{span}\{\bw_0,\bw_\star\}$: follow the max-curvature solution until $t_1$ and then transition to a radial solution until $t_f$.\label{fig:reachable}\vspace{-2mm}}
\end{figure}
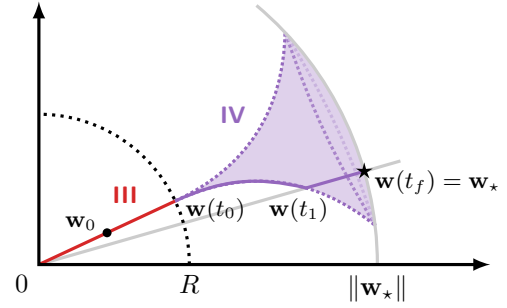

Let $\theta_{\max}$ be the largest possible angle between $\bw(t_0)$ and any reachable $\bw(t_f) = \bw_\star$, where we have fixed $t_f$. Define $\theta(t)$ to be the angle between $\bw(t)$ and $\bw(t_f)$.
\[
\theta(t_0) \,=\, \int_{t_0}^{t_f} \dot\theta\,\mathrm{d}t
\,\le\, 
\int_{t_0}^{t_f} | \dot\theta |\,\mathrm{d}t
\]
An alternative expression for this rate of change is the projection of $\dot\bw$ onto the orthogonal complement of $\bw$:
\begin{align*}
|\dot\theta| &= \frac{\normm{ \dot{\bw}-\bl(\dot{\bw}^\tp \tfrac{\bw}{\|\bw\|}\br)\tfrac{\bw}{\|\bw\|} }}{\norm{\bw}} 
= \frac{R_y\normm{ \bx-\tfrac{R_y}{2\|\bw\|^2}\bw }}{2\norm{\bw}}
\end{align*}
Where we used the fact that $\dot\bw = \bw^\tp \bx = \tfrac{R_y}{2}$ in Regime~IV. Now,
\begin{align}
\theta_{\max} &= \max_{\substack{\bx:\,  \bw^\tp\bx=R_y/2 \\ \norm{\bx} \le R_x}} \theta(t_0) \notag\\
&\le \max_{\substack{\bx:\,  \bw^\tp\bx=R_y/2 \\ \norm{\bx} \le R_x}} \int_{t_0}^{t_f} \frac{R_y\normm{ \bx-\tfrac{R_y}{2\|\bw\|^2}\bw }}{2\norm{\bw}}\,\mathrm{d}t\nonumber\\
&\leq \int_{t_0}^{t_f} \frac{\sqrt{R_x^2-\bl(\frac{R_y}{2\|\bw\|}\br)^2}}{\|\bw\|}\,\mathrm{d}t
\label{a4}
\end{align}
In the final step, we maximized over $\bx$. Notice that the integrand~\eqref{a4} is an  upper bound that only depends on $t_0$ and $\norm{\bw_\star}$ but not on $\bx$. One can also verify that this upper bound is achieved by the choice
\begin{equation*}
\bx = \frac{R_y}{2\|\bw\|}\hat\bw + \sqrt{R_x^2-\left(\frac{R_y}{2\|\bw\|}\right)^2}\frac{\bw_\star-(\hat\bw^\tp\bw_\star)\hat\bw}{\norm{\bw_\star-(\hat\bw^\tp\bw_\star)\hat\bw}}.
\end{equation*}
where $\hat\bw \defeq \bw/\norm{\bw}$ and $\bw_\star$ is any vector that  satisfies~\eqref{tf} with angle $\theta_{\max}$ with $\bw(t_0)$. Any $\bw_\star$ with this norm but angle $\theta_f < \theta_{\max}$ can also be reached by using the max-curvature control until time $t_1$, where $t_1$ is chosen such that $\theta_{f} = \int_{t_0}^{t_1} \frac{\sqrt{R_x^2-\bl(\frac{R_y}{2\|\bw\|}\br)^2}}{\|\bw\|}\,\mathrm{d}t$, and then using $\bx = \frac{R_y}{2\|\bw\|^2}\bw$ for $t_1 \le t \le t_f$. This piecewise path is illustrated in Figure~\ref{fig:reachable}.

Our constructed optimal trajectory lies in the 2D span of $\bw_\star$ and $\bw_0$. This shows that all reachable $\bw_\star$ can be reached via a 2D trajectory. \qedhere

\end{document}